\def\R		{\mathbb{R}}
\def\d		{\:\mathrm{d}}
\renewcommand{\vec}[1]{\boldsymbol{#1}}
\newcommand{\clifford}[1]{\ensuremath{C\kern-0.1em\ell_{#1}}}
\def\e		{\vec{e}}
\def\A		{\vec{A}}
\def\B		{\vec{B}}
\def\x		{\vec{x}}
\def\y		{\vec{y}}
\def\u		{\vec{u}}
\def\v		{\vec{v}}
 \newtheorem{thm}{Theorem}[section]
 \newtheorem{cor}[thm]{Corollary}
 \newtheorem{lem}[thm]{Lemma}
 \theoremstyle{definition}
 \newtheorem{defn}[thm]{Definition}
 \theoremstyle{remark}
 \newtheorem*{ex}{Example}
 \numberwithin{equation}{section}
\begin{document}

\title[Iterative Geometric Correlation]{Detection of Outer Rotations on 3D-Vector Fields with Iterative Geometric Correlation and its Efficiency}

\author[Bujack]{Roxana Bujack}
\address{%
Universit\"at Leipzig\\
Institut f\"ur Informatik\\
Augustusplatz 10\\
04109 Leipzig, Germany}
\email{bujack@informatik.uni-leipzig.de}

\author[Scheuermann]{Gerik Scheuermann}
\address{%
Universit\"at Leipzig\\
Institut f\"ur Informatik\\
Augustusplatz 10\\
04109 Leipzig, Germany}
\email{scheuermann@informatik.uni-leipzig.de}

\author[Hitzer]{Eckhard Hitzer}
\address{%
College of Liberal Arts, Department of Material Science,\\
International Christian University,\\
181-8585 Tokyo, Japan}
\email{hitzer@icu.ac.jp}

\date{\today}

\begin{abstract}
Correlation is a common technique for the detection of shifts. Its generalization to the multidimensional geometric correlation in Clifford algebras has been proven a useful tool for color image processing, because it additionally contains information about a rotational misalignment. But so far the exact correction of a three-dimensional outer rotation could only be achieved in certain special cases. 
\par
In this paper we prove that applying the geometric correlation iteratively has the potential to detect the outer rotational misalignment for arbitrary three-dimensional vector fields.
\par
We further present the explicit iterative algorithm, analyze its efficiency detecting the rotational misalignment in the color space of a color image. The experiments suggest a method for the acceleration of the algorithm, which is practically tested with great success. 
\end{abstract}
\keywords{geometric algebra, Clifford algebra, registration, outer rotation, correlation, iteration, color image processing.}

\maketitle

\section{Introduction}
In signal processing correlation is a basic technique to determine the similarity or dissimilarity of two signals. It is widely used for image registration, pattern matching, and feature extraction \cite{BRO92,ZIT03}. The idea of using correlation for registration of shifted signals is that at the very position, where the signals match, the correlation function will 
take its maximum, compare \cite{RK82}.
\par 
For a long time the generalization of this method to multidimensional signals has only been an amount of single channel processes. The elements of 
Clifford algebras $\clifford{p,q}$, compare \cite{C1878,HS84}, have a natural geometric interpretation, so the analysis of multivariate signals expressed as multivector valued functions is a very reasonable approach.
\par
Scheuermann \cite{Sch99} used Clifford algebras for vector field analysis. Together with Ebling \cite{Ebl03,Ebl06} they developed a pattern matching algorithm based on geometric convolution and correlation and accelerated it by means of a Clifford Fourier transform and its convolution theorem.
\par
At about the same time Sangwine et al. \cite{MSE01} introduced a generalized hypercomplex correlation for quaternions. Together with Ell and Moxey \cite{MSE02,MSE03} they used it to represent color images, interpreted as vector fields, geometrically. They discovered that this geometric correlation not only contains the translational difference of images given by the position of the magnitude peak, but also information about a possible rotational misalignment of two signals and showed how to apply them to approximately correct color space distortions. 
\par
Even though lately other approaches to work with color images were made \cite{GSH10,Schl11,MSM11} we want to extend the work and ideas of Moxey, Ell and Sangwine using hypercomplex correlation. We analyze vector fields $\v(\x):\R^m\to\R^3\subset\clifford{3,0}$ with values interpreted as elements of the geometric algebra $\clifford{3,0}$ and their copies produced from outer rotations. A great advantage of the geometric algebra is that many statements generally hold not just for vectors but for all multivectors. We will make use of that and state the more general formulae, whenever possible. 
\par
\begin{figure}[ht]
\centering
\subfigure[Original vector \newline field:  $\v(\x)$]{\includegraphics[width=0.227\textwidth]{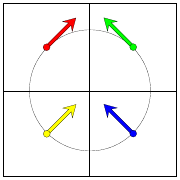}}\quad
\subfigure[Inner rotation: \newline  $\v(\operatorname{R }_{-\alpha}(\x))$]{\includegraphics[width=0.227\textwidth]{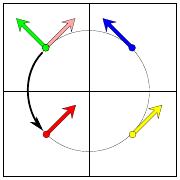}}\quad
\subfigure[Outer rotation:\newline $\operatorname{R}(\v(\x))$]{\includegraphics[width=0.227\textwidth]{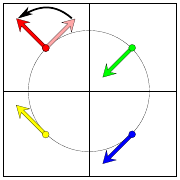}}\quad
\subfigure[Total rotation: \newline $\operatorname{R }_{\alpha}(\v(\operatorname{R }_{-\alpha}(\x)))$]{\includegraphics[width=0.227\textwidth]{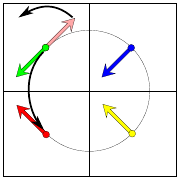}}
\caption{Effect of the rotation operator $\operatorname{R} _{\alpha}$ applied to an example vector field in different ways.\label{f:1}} 
\end{figure}
The term rotational misalignment with respect to multivector fields is ambiguous. In general three types of rotations can be distinguished, compare Figure \ref{f:1}. Let $\operatorname{R} _{\vec P,\alpha}$ be an operator, that describes a mathematically positive rotation by the angle $\alpha\in[0,\pi]$\footnote{As in \cite{MSE02} we encode the sign in the bivector $\vec P$ and deal with positive angles only.} in the plane $P$, spanned by the unit bivector $\vec P$. We say two multivector fields $\A(\x),\B(\x):\R^m\to\clifford{3,0}$ differ by an \textbf{outer rotation} if they suffice
\begin{equation}
\begin{aligned}\label{outer}
 \A(\x)=\operatorname{R} _{\vec P,\alpha}(\B(\x)).
\end{aligned}
\end{equation}
Independent from their position $\x$ the multivector $\A(\x)$ is the rotated copy of the multivector $\B(\x)$. This kind of rotation appears for example in color images, when the color vector space is turned, but the picture is not moved, compare \cite{MSE03}. In the theory of differential geometry this kind of rotation is referred to as a rotation in the tangential space of a manifold. The property of the argument space to be a vector field is not necessary to define an outer rotation. It can as well be a submanifold of a vector field. Since this paper works with outer rotations, please note that all results also hold for this generalization. It can, for example, detect the color space misalignment of a colored sphere like a globe, which can be of interest for the geoscientific domain of satellite images covering the earth.
\par
In contrast to that for $m\leq3$ an \textbf{inner rotation} is described by 
\begin{equation}
\begin{aligned}\A(\x)=\B(\operatorname{R} _{\vec P,-\alpha}(\x)).
\end{aligned}
\end{equation}
Here the starting position of every vector is rotated by $-\alpha$ then the old vector is reattached at the new position. It still points into the old direction. The inner rotation is suitable to describe the rotation of a color image. The color does not change when the picture is turned. 
\par
In the case of bijective fields $\A(\x),\B(\x):\R^3\to\R^3\subset\clifford{3,0}$ a \textbf{total rotation} is a combination of the previous ones defined by 
\begin{equation}
\begin{aligned}\A(\x)=\operatorname{R} _{\vec P,\alpha}(\B(\operatorname{R} _{\vec P,-\alpha}(\x))).
\end{aligned}
\end{equation} 
It can be interpreted as coordinate transform, that means as looking at the multivector field from another point of view. The positions and the multivectors are stiffly connected during the rotation.
\par
With respect to the definition of the correlation there are different formulae in current literature, \cite{Ebl06,MSE03}. We prefer the following one because it satisfies a geometric generalization of the Wiener-Khinchin theorem and because it coincides with the definition of the standard cross-correlation for complex functions in the special case of $\clifford{0,1}$, \cite{RK82}. For vector fields they mostly coincide anyways because of $\overline{\v(\x)}=\v(\x)$, where the overbar denotes reversion.
\begin{defn}
 The \textbf{geometric cross correlation} of two multivector valued functions $\A(\x),\B(\x):\R^m\to\clifford{p,q}$ is a multivector valued function defined by
\begin{equation}
\begin{aligned}
(\A\star \B)(\x):=&\int_{\R^m}\overline{\A(\y)}\B(\y+\x)\d^m\y,
\end{aligned}
\end{equation} 
where $\overline{\A(\y)}$ denotes the reversion $\sum\limits_{k=0}^n(-1)^{\frac12k(k-1)}\langle \A(\y)\rangle_k$.
\end{defn}
To simplify the notation we will only analyze the correlation at the origin. If the vector fields also differ by an inner shift this can for example be detected by evaluating the magnitude of the correlation \cite{Hes86} or phase correlation of the field magnitudes \cite{KH75}. 
Our methods can then be applied analogously to that translated position.
\section{Motivation}
In two dimensions a mathematically positive\footnote{anticlockwise} outer rotation of a vector field $\R^2\to\R^2\subset\clifford{2,0}$ by the angle $\alpha$ takes the shape
\begin{equation}
\begin{aligned}
\operatorname{R} _{\e_{12},\alpha}(\v(\x))=e^{-\alpha \e_{12}}\v(\x).
\end{aligned} 
\end{equation}
So the product of the vector field and its copy at any position $\x\in\R^m$ yields
\begin{equation}
\begin{aligned}\label{outerprod}
\operatorname{R} _{\e_{12},\alpha}(\v(\x))\v(\x)=e^{-\alpha \e_{12}}\v(\x)\v(\x)=\v(\x)^2e^{-\alpha \e_{12}},
\end{aligned} 
\end{equation}
with $\v(\x)^2=\v(\x)\overline{\v(\x)}=||\v(\x)||_2^2\in\R$ and the rotation can fully be restored by rotating back with the inverse of (\ref{outerprod}) or explicitly calculating $\alpha$ as described in \cite{Hes86}. This property is inherited by the geometric correlation at the origin
\begin{equation}
\begin{aligned}
(\operatorname{R} _{\e_{12},\alpha}(\v)\star \v)(0)&=\int_{\R^m}\overline{\operatorname{R} _{\e_{12},\alpha}(\v(\x))}\v(\x)\d^m\x
=||\v(\x)||_{L^2}^2e^{-\alpha \e_{12}},
\end{aligned} 
\end{equation}
which is to be preferred because of its robustness.
\par
In three dimensions not only the angle but also the plane of rotation $P$ has to be detected in order to reconstruct the whole transform. We want to analyze if the geometric correlation at the origin contains enough information here, too. First we look at two vectors $\u,\v$ that suffice $\u=\operatorname{R} _{\vec P,\alpha}(\v)$. Their geometric product 
\begin{equation}
\begin{aligned}\label{vectors}
\u\v=&\u\cdot \v+\u\wedge \v
=|\u||\v|\big(\cos(\angle(\u,\v))+\sin(\angle(\u,\v))\frac{\u\wedge \v}{|\u\wedge \v|}\big)
\end{aligned}
\end{equation}
contains an angle and a plane and therefore seems very motivating. But the rotation $\operatorname{R} _{\vec P,\alpha}$ we used is not necessarily the one which is described by $\operatorname{R}_{\frac{\u\wedge \v}{|\u\wedge \v|},\angle(\u,\v)}$. These two rotations only coincide if the vectors lie completely within the plane $P$. The reason for that is as follows. A vector and its rotated copy do not contain enough information to reconstruct the rotation that produced the copy. Figure \ref{f:2} shows some of the infinitely many different rotations that can result in the same copy. Regard the set of all circles $C$, that contain the end points of $\u$ and $\v$ and are located on the sphere $S_{|\u|}(0)$ with radius $r=|\u|=|\v|$ centered at the origin. Every plane that includes a circle in $C$ is a possible plane for the rotation from $\v$ to $\u$.
The information we get out of the geometric product belongs to the plane that fully contains both vectors. This rotation is the one that has the smallest angle of all possible ones and forms the largest circle on a great circle of the sphere, shown on the very left in Figure \ref{f:2}.
\par
\begin{figure}[ht]
\centering
\subfigure[Rotation with the smallest angle]{\includegraphics[width=0.227\textwidth]{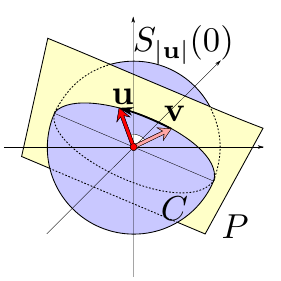}}\quad
\subfigure[Rotation with a rather small angle]{\includegraphics[width=0.227\textwidth]{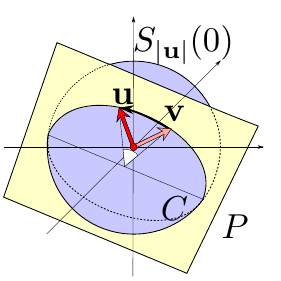}}\quad
\subfigure[Rotation with a rather larger angle]{\includegraphics[width=0.227\textwidth]{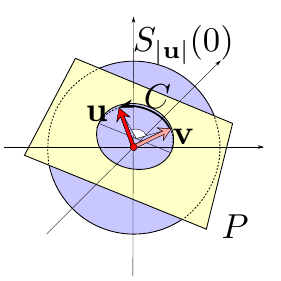}}\quad
\subfigure[Rotation with the largest angle]{\includegraphics[width=0.227\textwidth]{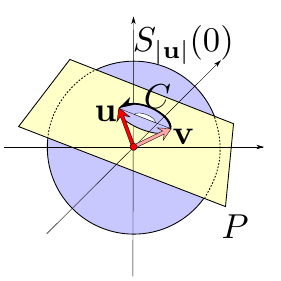}}
\caption{Different rotations of a vector $\u$ that lead to the same result $\v$.\label{f:2}} 
\end{figure}
For the product of just two vectors the information from the geometric product is sufficient to realign them, but for a whole vector field the detected rotation from the correlation will in general not be the correct one. Moxey et al. already stated in \cite{MSE03} that the hypercomplex correlation can effectively compute the rotation over two images, but that the perfect mapping can only be found, if specific conditions hold, for example if the images consist of one color only.
\begin{figure}[ht]
\centering
\subfigure[Vector field from (\ref{bsp1a})]{\includegraphics[width=0.48\textwidth]{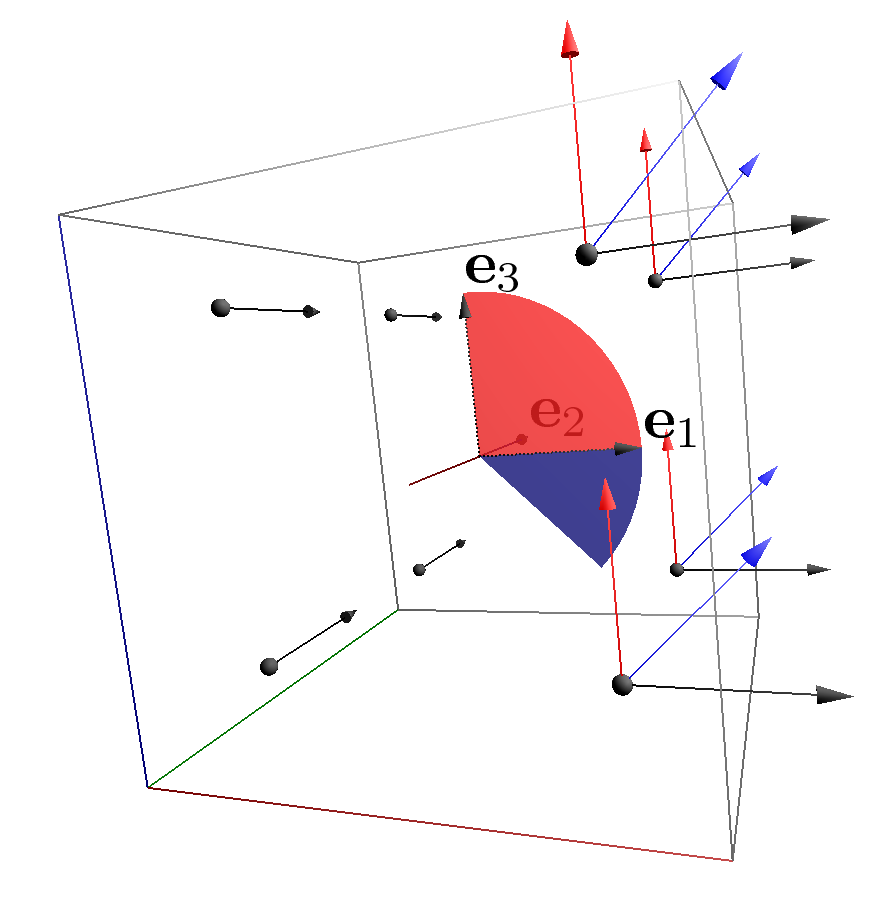}}\quad
\subfigure[Vector field from (\ref{bsp2a})]{\includegraphics[width=0.48\textwidth]{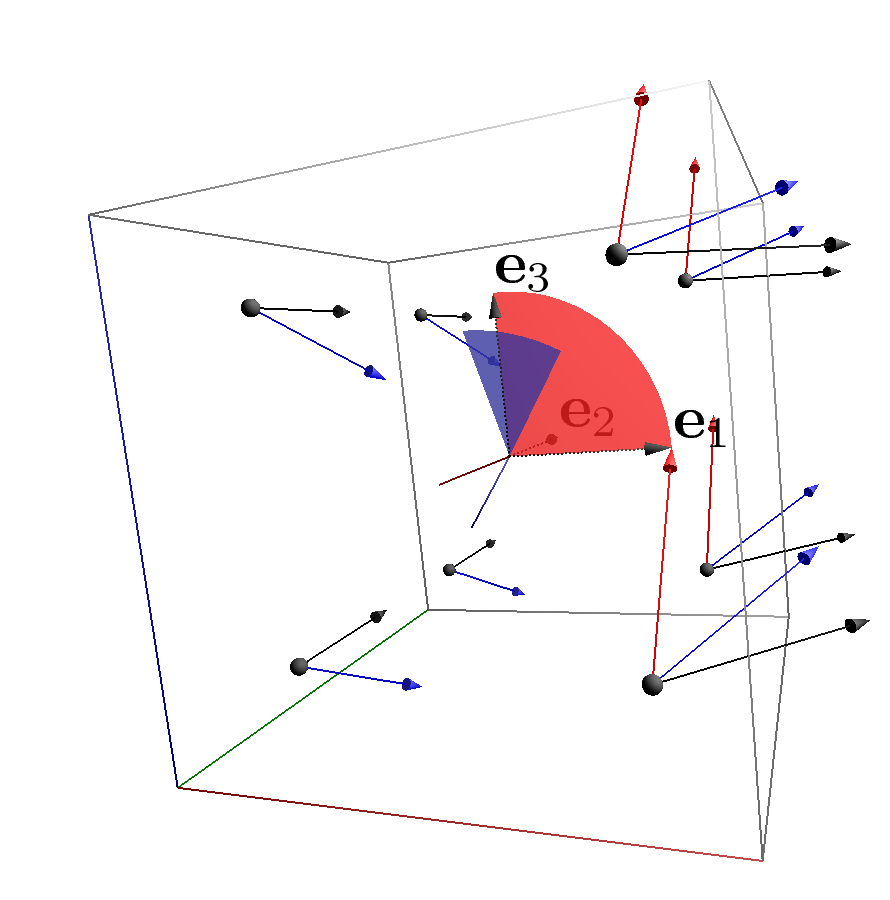}}\quad
\caption{Visualization of the vector fields from the example using CLUCalc \cite{Per09}. Original fields are depicted in black, rotated copies in red, corrected fields after application of the correlation rotor in blue.\label{f:3}} 
\end{figure}
%
\begin{ex}
The geometric correlation of the vector field
\begin{equation}
\begin{aligned}\label{bsp1a}
\v(\x)=&\begin{cases}
         \e_1,&\text{ for }x_1,x_2,x_3\in(-1,1), x_1\geq 0,\\
\e_2,&\text{ for }x_1,x_2,x_3\in(-1,1), x_1<0,\\
0,&\text{ else,}\\
        \end{cases}
\end{aligned}
\end{equation}
and its copy rotated by $\operatorname{R}_{\e_{13},\frac\pi2}$, which are shown on the left of Figure \ref{f:3}, suffices
\begin{equation}
\begin{aligned}
(\operatorname{R}_{\e_{13},\frac\pi2}(\v)\star\v)(0)
=&\int_{-1}^1\int_{-1}^1( \int_{0}^1\e_3\e_1 \d x_1
+\int_{-1}^0 \e_2\e_2\d x_1)\d x_3\d x_2\\
=&-4\e_{13}+4
\\=&\sqrt{32}e^{-\frac\pi4\e_{13}}.
\end{aligned}
\end{equation}
Here the unit bivector $\e_{13}$ indeed describes the rotational plane we looked for, but the angle $\frac\pi4$ is only half the angle of the original rotation. The result is that the restored field and the original one do not match.
\par
In general the correlation even detects a wrong rotational plane, consider for example
\begin{equation}
\begin{aligned}
\v(\x)=&\begin{cases}\label{bsp2a}
         \e_1+\e_2,&\text{ for }x_1,x_2,x_3\in(-1,1), x_1\geq 0,\\
\e_2,&\text{ for }x_1,x_2,x_3\in(-1,1), x_1<0,\\
0,&\text{ else,}\\
        \end{cases}
\end{aligned}
\end{equation}
rotated by $\operatorname{R}_{\e_{13},\frac\pi2}$, depicted on the right of Figure \ref{f:3}. The geometric correlation will return the plane spanned by $\e_{12}+\e_{13}+\e_{23}$ and the angle $\arctan(\frac{\sqrt{3}}{2})$, which are both incorrect.
\end{ex}
In the following section we will analyze how effective the correlation can calculate the rotation and prove, that despite the impression, the previous example gives, the geometric correlation contains enough information to reconstruct the misalignment.
\section{Outer Rotation and the Geometric Correlation}
The three-dimensional mathematically positive outer rotation (\ref{outer}) of a vector field by the angle $\alpha\in[0,\pi]$ along the plane $P$, spanned by the unit bivector $\vec P$, takes the shape
\begin{equation}
\begin{aligned}
\operatorname{R} _{P,\alpha}(\v(\x))=&e^{-\frac\alpha2 \vec P}\v(\x)e^{\frac\alpha2 \vec P}
=e^{-\alpha \vec P}\v_{\parallel\vec P}(\x)+\v_{\perp\vec P}(\x).
\end{aligned} 
\end{equation}
So the product of the vector field and its copy at any position
\begin{equation}
\begin{aligned}\label{outer_prod}
\operatorname{R} _{P,\alpha}(\v(\x))\v(\x)=&
e^{-\alpha \vec P}\v_{\parallel\vec P}(\x)^2+(e^{-\alpha \vec P}-1)\v_{\parallel\vec P}(\x)\v_{\perp\vec P}(\x)+\v_{\perp\vec P}(\x)^2
\end{aligned} 
\end{equation}
does usually not simply yield the rotation we looked for, like in the two-dimensional case, but a rather good approximation depending on the parallel and the orthogonal parts of the vector field with respect to the plane of rotation. In order to keep the notation short we partly drop the argument $\x$ of the vector fields $\v(\x)$ by just writing $\v$ and assume without loss of generality
\begin{equation}
\begin{aligned}\label{assumption}
||\operatorname{R} _{P,\alpha}(\v)||_{L^2}^2=||\v||_{L^2}^2=||\v_{\parallel\vec P}||_{L^2}^2+||\v_{\perp\vec P}||_{L^2}^2=1.
\end{aligned} 
\end{equation}
\begin{lem}\label{l:outer_prod}
Let $\v\in L^2(\R^m,\R^{3,0}\subset\clifford{3,0})$ be a square integrable vector field and $\operatorname{R} _{P,\alpha}(\v)$ its copy from an outer rotation. The rotational misalignment of $\operatorname{R} _{P,\alpha}(\v)$ does not increase if we apply the outer rotation encoded in the normalized geometric cross correlation 
\begin{equation}
\begin{aligned}
\frac{(\operatorname{R} _{P,\alpha}(\v)\star\v)(0)}{{||\operatorname{R} _{P,\alpha}(\v)||_{L^2}||\v||_{L^2}}}.
\end{aligned} 
\end{equation}
\end{lem}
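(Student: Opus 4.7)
The plan is to interpret the rotational misalignment of $\operatorname{R}_{\vec P,\alpha}(\v)$ as the angle $\alpha$ of the rotor taking $\v$ to $\u := \operatorname{R}_{\vec P,\alpha}(\v)$, and to show that after applying the encoded rotation $\operatorname{R}_{\vec Q,\theta}$ to $\u$, the resulting field $\u'$ still differs from $\v$ by an outer rotation, but with an angle $\gamma$ satisfying $\gamma \le \alpha$. Since composing two outer rotations yields an outer rotation and the scalar part of the composed rotor equals $\cos(\gamma/2)$, this amounts to proving $\cos(\gamma/2) \ge \cos(\alpha/2)$.

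First I would integrate (\ref{outer_prod}) to obtain
\begin{equation*}
C = a\,e^{-\alpha\vec P} + (e^{-\alpha\vec P}-1)\,\vec M + b,
\end{equation*}
where $a:=\|\v_{\parallel\vec P}\|_{L^2}^2$, $b:=\|\v_{\perp\vec P}\|_{L^2}^2$, and $\vec M:=\int_{\R^m}\v_{\parallel\vec P}(\x)\wedge\v_{\perp\vec P}(\x)\,\d^m\x$. Since $\v_{\parallel\vec P}$ lies in $\vec P$ while $\v_{\perp\vec P}$ is orthogonal to $\vec P$, the bivector $\vec M$ is perpendicular to $\vec P$ in the three-dimensional bivector space, as is $\vec T:=\vec P\vec M$, so $\langle\vec M\vec P\rangle_0 = \langle\vec T\vec P\rangle_0 = 0$. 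Splitting $C$ into its grades gives $C_0 = a\cos\alpha + b$ and $\vec C_2 = -a\sin\alpha\,\vec P + (\cos\alpha - 1)\vec M - \sin\alpha\,\vec T$, whence a short computation yields the key identity $\langle \vec Q\,\vec P\rangle_0 = a\sin\alpha/|\vec C_2|$, where $\vec Q := \vec C_2/|\vec C_2|$.

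Next I would write $C = |C|(\cos\theta + \sin\theta\,\vec Q)$, with $\cos\theta = C_0/|C|$ and $\sin\theta = |\vec C_2|/|C|$, so that the encoded rotor is $e^{-\theta\vec Q/2}$ and the composed rotor for $\u'$ relative to $\v$ is
\begin{equation*}
R_{\text{comp}} := e^{-\theta\vec Q/2}\,e^{-\alpha\vec P/2}.
\end{equation*}
Its scalar part expands as
\begin{equation*}
\cos(\gamma/2) = \cos(\tfrac{\theta}{2})\cos(\tfrac{\alpha}{2}) + \sin(\tfrac{\theta}{2})\sin(\tfrac{\alpha}{2})\,\langle\vec Q\,\vec P\rangle_0.
\end{equation*}
Substituting the formulas for $\cos\theta$, $\sin\theta$, and $\langle\vec Q\,\vec P\rangle_0$ in terms of $a, b, \alpha, |\vec M|^2$, and applying half-angle identities, the claim $\cos(\gamma/2) \ge \cos(\alpha/2)$ reduces to an elementary algebraic inequality that follows from the pointwise Cauchy-Schwarz bound $0 \le |\vec M|^2 \le ab$ combined with the non-negativity of $1 - \cos\alpha$ and the normalization $a + b = 1$.

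The main obstacle will be this final half-angle computation: boiling the rotor-composition inequality down to an elementary relation between $a$, $b$, $\alpha$, and $|\vec M|^2$ requires some patience and careful use of $|C|^2 = C_0^2 + |\vec C_2|^2$, but needs no new idea beyond the Cauchy-Schwarz bound on $|\vec M|^2$ and the structural orthogonalities of $\vec M$ and $\vec T$ to $\vec P$ noted above.
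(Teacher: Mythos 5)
Your proposal is correct and follows essentially the same route as the paper: the same parallel/perpendicular splitting of the correlation into $a$, $b$ and $\vec M=\int\v_{\parallel\vec P}\wedge\v_{\perp\vec P}\d^m\x$, the same rotor composition, and the same Cauchy--Schwarz bound $|\vec M|^2\le ab$ to close the argument. The only real difference is in the endgame: you compare angles via the scalar part $\cos(\gamma/2)$ of the composed rotor, using $\langle\vec Q\vec P\rangle_0=a\sin\alpha/|\vec C_2|$, whereas the paper compares $\tan(\beta/2)$ with $\tan(\alpha/2)$ and therefore must also expand the bivector norm of the composed rotor -- your variant slightly shortens that computation but rests on the same ideas, and your claimed final reduction does indeed go through (it boils down to $|C|\le 1$, which is exactly where $|\vec M|^2\le ab$ and $a+b=1$ enter).
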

\begin{proof}
We denote the polar form of the normalized geometric cross correlation by $e^{\varphi\vec Q}$ with the unit bivector $\vec Q$ and $\varphi\in[0,\pi]$. So using (\ref{outer_prod}) and the assumption (\ref{assumption}) we get
\begin{equation}
\begin{aligned}
e^{\varphi\vec Q}
=&(\operatorname{R} _{P,\alpha}(\v)\star\v)(0)
\\=&
\int e^{-\alpha \vec P}\v_{\parallel\vec P}(\x)^2+(e^{-\alpha \vec P}-1)\v_{\parallel\vec P}(\x)\v_{\perp\vec P}(\x)
+\v_{\perp\vec P}(\x)^2\d^m \x
\\=&e^{-\alpha \vec P}||\v_{\parallel\vec P}||_{L^2}^2+(e^{-\alpha \vec P}-1)\int\v_{\parallel\vec P}\v_{\perp\vec P}\d \x
+||\v_{\perp\vec P}||_{L^2}^2
\end{aligned} 
\end{equation}
with the scalar part
\begin{equation}
\begin{aligned}\label{scphiq}
\cos(\varphi)=&\langle e^{\varphi\vec Q}\rangle_0
=\cos(\alpha)||\v_{\parallel\vec P}||_{L^2}^2+||\v_{\perp\vec P}||_{L^2}^2
\end{aligned} 
\end{equation}
and the bivector part
\begin{equation}
\begin{aligned}\label{bivphiq}
\sin(\varphi)\vec Q&=\langle e^{\varphi\vec Q}\rangle_2
\\&=-\sin(\alpha) \vec P||\v_{\parallel\vec P}||_{L^2}^2+(-\sin(\alpha) \vec P+\cos(\alpha)-1)\int\v_{\parallel\vec P}\v_{\perp\vec P}\d \x
\end{aligned} 
\end{equation}
with squared magnitude
\begin{equation}
\begin{aligned}
||\langle e^{\varphi\vec Q}\rangle_2||^2
=&\sin(\alpha)^2||\v_{\parallel\vec P}||_{L^2}^4+(2-2\cos(\alpha))\,||\int\v_{\parallel\vec P}\v_{\perp\vec P}\d \x||^2.
\end{aligned} 
\end{equation}
Thats why we know the explicit expressions for
\begin{equation}
\begin{aligned}\label{bivphiq2}
\vec Q
=&\frac{\langle e^{\varphi\vec Q}\rangle_2}{||\langle e^{\varphi\vec Q}\rangle_2||}
\end{aligned} 
\end{equation}
and
\begin{equation}
\begin{aligned}
\varphi
=&\operatorname{atan2}(||\langle e^{\varphi\vec Q}\rangle_2||,\langle e^{\varphi\vec Q}\rangle_0).
\end{aligned} 
\end{equation}
The outer rotation encoded in the correlation applied to $\operatorname{R} _{P,\alpha}(\v)$ takes the shape
\begin{equation}
\begin{aligned}
e^{-\frac\varphi2\vec Q}\operatorname{R} _{P,\alpha}(\v)e^{\frac\varphi2\vec Q}=&e^{-\frac\varphi2\vec Q}e^{-\frac\alpha2\vec P}\v e^{\frac\alpha2\vec P}e^{\frac\varphi2\vec Q}.
\end{aligned} 
\end{equation}
The composition of the two rotations is a rotation itself. It shall be written as 
\begin{equation}
\begin{aligned}
e^{-\frac\beta2\vec R}\v e^{\frac\beta2\vec R}
\end{aligned} 
\end{equation}
with the unit bivector $\vec R$ and angle $\beta\in[0,\pi]$, so we get the relation
\begin{equation}
\begin{aligned}\label{beta_def}
e^{\frac\beta2\vec R}=e^{\frac\alpha2\vec P}e^{\frac\varphi2\vec Q}.
\end{aligned} 
\end{equation}
To determine whether or not it is smaller than the original one, it is sufficient to compare $\beta$ and $\alpha$. We will prove that $\beta\leq\alpha$ by proving the inequality
\begin{equation}
\begin{aligned}\label{beta<alpha}
\frac\beta2=\arg(e^{\frac\beta2\vec R})=\arg(e^{\frac\alpha2\vec P}e^{\frac\varphi2\vec Q})\leq\arg(e^{\frac\alpha2\vec P})=\frac\alpha2.
\end{aligned} 
\end{equation}
We evaluate (\ref{beta_def}) by inserting   
(\ref{bivphiq2}) and (\ref{bivphiq}) 
and get
\begin{equation}
\begin{aligned}\label{beta3}
e^{\frac\beta2\vec R}
=&e^{\frac\alpha2\vec P}e^{\frac\varphi2\vec Q}
\\=&\cos(\frac\alpha2)\cos(\frac\varphi2)+\cos(\frac\alpha2)\sin(\frac\varphi2)\vec Q+\sin(\frac\alpha2)\cos(\frac\varphi2)\vec P
\\&+\sin(\frac\alpha2)\sin(\frac\varphi2)\vec P\vec Q\\
=&\cos(\frac\alpha2)\cos(\frac\varphi2)+\sin(\frac\alpha2)\cos(\frac\varphi2)\vec P
\\&+\frac{\sin(\frac\varphi2)}{||\langle e^{\varphi\vec Q}\rangle_2||}\big(-\cos(\frac\alpha2)\sin(\alpha) \vec P||\v_{\parallel\vec P}||_{L^2}^2
\\&+\cos(\frac\alpha2)(\cos(\alpha)-1)\int\v_{\parallel\vec P}\v_{\perp\vec P}\d \x
\\&-\cos(\frac\alpha2)\sin(\alpha) \vec P\int\v_{\parallel\vec P}\v_{\perp\vec P}\d \x
+\sin(\frac\alpha2)\sin(\alpha) \int\v_{\parallel\vec P}\v_{\perp\vec P}\d \x
\\&+\sin(\frac\alpha2)\sin(\alpha)||\v_{\parallel\vec P}||_{L^2}^2
+\sin(\frac\alpha2)(\cos(\alpha)-1)\vec P\int\v_{\parallel\vec P}\v_{\perp\vec P}\d \x
\big),
\end{aligned} 
\end{equation}
and applying addition theorems 
on the $\int\v_{\parallel\vec P}\v_{\perp\vec P}\d \x$-parts 
leads to
\begin{equation}
\begin{aligned}\label{beta4}
&\cos(\frac\alpha2)(\cos(\alpha)-1)\int\v_{\parallel\vec P}\v_{\perp\vec P}\d \x+\sin(\frac\alpha2)\sin(\alpha)\int\v_{\parallel\vec P}\v_{\perp\vec P}\d \x 
\\&=\big(\cos(\frac\alpha2)\cos(\alpha)+\sin(\frac\alpha2)\sin(\alpha)-\cos(\frac\alpha2)\big)\int\v_{\parallel\vec P}\v_{\perp\vec P}\d \x
\\&=\big(\cos(\frac\alpha2-\alpha)-\cos(\frac\alpha2)\big)\int\v_{\parallel\vec P}\v_{\perp\vec P}\d \x
\\&=0,
\end{aligned} 
\end{equation}
and on the $\vec P\int\v_{\parallel\vec P}\v_{\perp\vec P}\d \x$-parts to
\begin{equation}
\begin{aligned}\label{beta5}
&\sin(\frac\alpha2)(\cos(\alpha)-1)\vec P\int\v_{\parallel\vec P}\v_{\perp\vec P}\d \x
-\cos(\frac\alpha2)\sin(\alpha) \vec P\int\v_{\parallel\vec P}\v_{\perp\vec P}\d \x
\\&=\big(\sin(\frac\alpha2)\cos(\alpha)-\cos(\frac\alpha2)\sin(\alpha)-\sin(\frac\alpha2)\big)\vec P\int\v_{\parallel\vec P}\v_{\perp\vec P}\d \x
\\&=\big(\sin(\frac\alpha2-\alpha)-\sin(\frac\alpha2)\big)\vec P\int\v_{\parallel\vec P}\v_{\perp\vec P}\d \x
\\&=-2\sin(\frac\alpha2)\vec P\int\v_{\parallel\vec P}\v_{\perp\vec P}\d \x.
\end{aligned} 
\end{equation}
We insert (\ref{beta4}) and (\ref{beta5}) in (\ref{beta3}) and get
\begin{equation}
\begin{aligned}
e^{\frac\beta2\vec R}
=&\cos(\frac\alpha2)\cos(\frac\varphi2)+\sin(\frac\alpha2)\cos(\frac\varphi2)\vec P
\\&+\frac{\sin(\frac\varphi2)}{||\langle e^{\varphi\vec Q}\rangle_2||}\big(-\cos(\frac\alpha2)\sin(\alpha) \vec P||\v_{\parallel\vec P}||_{L^2}^2
\\&+\sin(\frac\alpha2)\sin(\alpha)||\v_{\parallel\vec P}||_{L^2}^2
-2\sin(\frac\alpha2)\vec P\int\v_{\parallel\vec P}\v_{\perp\vec P}\d \x\big).
\end{aligned} 
\end{equation}
Its scalar part
\begin{equation}
\begin{aligned}\label{sc}
\langle e^{\frac\beta2\vec R}\rangle_0
=&\cos(\frac\alpha2)\cos(\frac\varphi2)
+\frac{1}{||\langle e^{\varphi\vec Q}\rangle_2||}\sin(\frac\alpha2)\sin(\frac\varphi2)\sin(\alpha)||\v_{\parallel\vec P}||_{L^2}^2
\end{aligned} 
\end{equation}
is generally positive, because $\alpha,\varphi\in[0,\pi]$
and the bivector part 
\begin{equation}
\begin{aligned}
\langle e^{\frac\beta2\vec R}\rangle_2
=&\sin(\frac\alpha2)\cos(\frac\varphi2)\vec P-\frac{1}{||\langle e^{\varphi\vec Q}\rangle_2||}
\cos(\frac\alpha2)\sin(\frac\varphi2)\sin(\alpha)||\v_{\parallel\vec P}||_{L^2}^2\vec P
\\&-\frac{2}{||\langle e^{\varphi\vec Q}\rangle_2||}\sin(\frac\alpha2)\sin(\frac\varphi2)\vec P\int\v_{\parallel\vec P}\v_{\perp\vec P}\d \x
\end{aligned} 
\end{equation}
has the squared norm
\begin{equation}
\begin{aligned}\label{bivnorm}
||\langle e^{\frac\beta2\vec R}\rangle_2||^2
=&\sin(\frac\alpha2)^2\cos(\frac\varphi2)^2
-\frac{1}{||\langle e^{\varphi\vec Q}\rangle_2||}\cos(\frac\varphi2)\sin(\frac\varphi2)\sin(\alpha)^2||\v_{\parallel\vec P}||_{L^2}^2
\\&+\frac{1}{||\langle e^{\varphi\vec Q}\rangle_2||^2}\cos(\frac\alpha2)^2\sin(\frac\varphi2)^2\sin(\alpha)^2||\v_{\parallel\vec P}||_{L^2}^4
\\&+\frac{4}{||\langle e^{\varphi\vec Q}\rangle_2||^2}\sin(\frac\alpha2)^2\sin(\frac\varphi2)^2||\int\v_{\parallel\vec P}\v_{\perp\vec P}\d \x||^2.
\end{aligned} 
\end{equation}
For the next inequalities we use, that all appearing parts are positive and that the tangent and the quadratic function are monotonically increasing for positive arguments. We get
\begin{equation}
\begin{aligned}
\frac\beta2\leq\frac\alpha2
&\Leftrightarrow
\arctan(\frac{||\langle e^{\frac\beta2\vec R}\rangle_2||}{\langle e^{\frac\beta2\vec R}\rangle_0})
\leq\arctan(\frac{\sin(\frac\alpha2)}{\cos(\frac\alpha2)})
\\&\Leftrightarrow
\frac{||\langle e^{\frac\beta2\vec R}\rangle_2||}{\langle e^{\frac\beta2\vec R}\rangle_0}
\leq\frac{\sin(\frac\alpha2)}{\cos(\frac\alpha2)}
\\&\Leftrightarrow
||\langle e^{\frac\beta2\vec R}\rangle_2||^2\cos(\frac\alpha2)^2||\langle e^{\varphi\vec Q}\rangle_2||^2
\leq\sin(\frac\alpha2)^2\langle e^{\frac\beta2\vec R}\rangle_0^2||\langle e^{\varphi\vec Q}\rangle_2||^2.
\end{aligned} 
\end{equation}
Now we insert the scalar part (\ref{sc}) and the bivector norm (\ref{bivnorm})
\begin{equation}
\begin{aligned}
&\sin(\frac\alpha2)^2\cos(\frac\alpha2)^2\cos(\frac\varphi2)^2||\langle e^{\varphi\vec Q}\rangle_2||^2
+\sin(\frac\varphi2)^2\cos(\frac\alpha2)^4\sin(\alpha)^2||\v_{\parallel\vec P}||_{L^2}^4
\\&-||\langle e^{\varphi\vec Q}\rangle_2|| \cos(\frac\varphi2)\sin(\frac\varphi2)\cos(\frac\alpha2)^2\sin(\alpha)^2||\v_{\parallel\vec P}||_{L^2}^2
\\&+4\sin(\frac\varphi2)^2\sin(\frac\alpha2)^2\cos(\frac\alpha2)^2||\int\v_{\parallel\vec P}\v_{\perp\vec P}\d \x||^2
\\\leq
&\sin(\frac\alpha2)^2\cos(\frac\alpha2)^2\cos(\frac\varphi2)^2||\langle e^{\varphi\vec Q}\rangle_2||^2
+\sin(\frac\alpha2)^4\sin(\frac\varphi2)^2\sin(\alpha)^2||\v_{\parallel\vec P}||_{L^2}^4
\\&+2\cos(\frac\alpha2)\cos(\frac\varphi2) \sin(\frac\alpha2)^3\sin(\frac\varphi2)\sin(\alpha)||\langle e^{\varphi\vec Q}\rangle_2||\,||\v_{\parallel\vec P}||_{L^2}^2,
\end{aligned} 
\end{equation}
remove the identical parts, identify $2\sin(\frac\alpha2)\cos(\frac\alpha2)=\sin(\alpha)$, divide both sides by $\sin(\frac\varphi2)\sin(\alpha)^2$, and get
\begin{equation}
\begin{aligned}\label{bew_b>a_absch}
\Leftrightarrow& \sin(\frac\varphi2)\cos(\frac\alpha2)^4||\v_{\parallel\vec P}||_{L^2}^4
-||\langle e^{\varphi\vec Q}\rangle_2|| \cos(\frac\varphi2)\cos(\frac\alpha2)^2||\v_{\parallel\vec P}||_{L^2}^2
\\&+\sin(\frac\varphi2)||\int\v_{\parallel\vec P}\v_{\perp\vec P}\d \x||^2
\\&\leq
\sin(\frac\alpha2)^4\sin(\frac\varphi2)||\v_{\parallel\vec P}||_{L^2}^4
+\cos(\frac\varphi2) \sin(\frac\alpha2)^2||\langle e^{\varphi\vec Q}\rangle_2||\,||\v_{\parallel\vec P}||_{L^2}^2.
 \\\Leftrightarrow&
 \sin(\frac\varphi2)\big((\cos(\frac\alpha2)^4-\sin(\frac\alpha2)^4)||\v_{\parallel\vec P}||_{L^2}^4
+||\int\v_{\parallel\vec P}\v_{\perp\vec P}\d \x||^2\big)
\\&\leq
\cos(\frac\varphi2) ||\langle e^{\varphi\vec Q}\rangle_2||( \sin(\frac\alpha2)^2+\cos(\frac\alpha2)^2)||\v_{\parallel\vec P}||_{L^2}^2
\\\Leftrightarrow&
 \sin(\frac\varphi2)(\cos(\alpha)||\v_{\parallel\vec P}||_{L^2}^4
+||\int\v_{\parallel\vec P}\v_{\perp\vec P}\d \x||^2)
\\&\leq
\cos(\frac\varphi2) ||\langle e^{\varphi\vec Q}\rangle_2||\,||\v_{\parallel\vec P}||_{L^2}^2.
\end{aligned} 
\end{equation}
In (\ref{bew_b>a_absch}) we identified $\cos(\frac\alpha2)^4-\sin(\frac\alpha2)^4=\cos(\alpha)$ and $\sin(\frac\alpha2)^2+\cos(\frac\alpha2)^2=1$. We further use (\ref{scphiq}) to replace $\cos(\alpha)$ by $(\cos(\varphi)-||\v_{\perp\vec P}||_{L^2}^2)||\v_{\parallel\vec P}||_{L^2}^{-2}$ and (\ref{bivphiq}) to replace $||\langle e^{\varphi\vec Q}\rangle_2||$ by $\sin(\varphi)$, which leads to
\begin{equation}
\begin{aligned}\label{beweisende}
\Leftrightarrow
&\sin(\frac\varphi2)\big((\cos(\varphi)-||\v_{\perp\vec P}||_{L^2}^2)||\v_{\parallel\vec P}||_{L^2}^2
+||\int\v_{\parallel\vec P}\v_{\perp\vec P}\d \x||^2\big)
\\&\leq
\cos(\frac\varphi2) \sin(\varphi)||\v_{\parallel\vec P}||_{L^2}^2
\\\Leftrightarrow
&\sin(\frac\varphi2)\big((2\cos(\frac\varphi2)^2-1-||\v_{\perp\vec P}||_{L^2}^2)||\v_{\parallel\vec P}||_{L^2}^2
+||\int\v_{\parallel\vec P}\v_{\perp\vec P}\d \x||^2\big)
\\&\leq
2\sin(\frac\varphi2)\cos(\frac\varphi2)^2||\v_{\parallel\vec P}||_{L^2}^2
\\\Leftrightarrow
&\sin(\frac\varphi2)\big(-||\v_{\parallel\vec P}||_{L^2}^2-||\v_{\parallel\vec P}||_{L^2}^2||\v_{\perp\vec P}||_{L^2}^2
+||\int\v_{\parallel\vec P}\v_{\perp\vec P}\d \x||^2\big)
\leq
0.
\end{aligned} 
\end{equation}
The Cauchy Schwartz inequality (CSI) of $L^2(\R)$ guarantees
\begin{equation}
\begin{aligned}\label{csi}
||\int\v_{\parallel\vec P}(\x)\v_{\perp\vec P}(\x)\d\x||^2
\leq&
(\int ||\v_{\parallel\vec P}(\x)\v_{\perp\vec P}(\x)||\d\x)^2
\\=&
(\int ||\v_{\parallel\vec P}(\x)||\,||\v_{\perp\vec P}(\x)||\d\x)^2
\\=&\langle||\v_{\parallel\vec P}(\x)||,||\v_{\perp\vec P}(\x)||\rangle_{L^2}^2
\\\overset{\text{CSI}}\leq&
||\v_{\parallel\vec P}||_{L^2}^2||\v_{\perp\vec P}||_{L^2}^2,
\end{aligned} 
\end{equation}
so we know, that the part in the brackets in the last line of (\ref{beweisende}) is always negative. The sine is always positive for $\varphi\in[0,\pi]$. Therefore in the shape of (\ref{beweisende}) it is easy to recognize that the inequality $\beta\leq\alpha$ is generally fulfilled.
\end{proof}
We want to apply Lemma \ref{l:outer_prod} repeatedly and construct a series of decreasing angles, describing the remaining misalignment of the vector fields. The next theorem shows that the misalignment vanishes by iteration.
\begin{thm}\label{t:conv2}
For a square integrable vector field $\v\in L^2(\R^m,\R^{3,0}\subset\clifford{3,0})$ let $\beta:[0,\pi)\to[0,\pi)$ be a function defined by $\beta(\alpha)=2\arg(e^{\frac\alpha2 \vec P}e^{\frac\varphi2 \vec Q})$ with 
\begin{equation}
\begin{aligned}
e^{\varphi\vec Q}=\frac{(\operatorname{R} _{P,\alpha}(\v)\star\v)(0)}{{||\operatorname{R} _{P,\alpha}(\v)||_{L^2}||\v||_{L^2}}}.
\end{aligned} 
\end{equation}
Then the series $\alpha_0=\alpha,\alpha_{n+1}=\beta(\alpha_{n})$ converges to zero for all $\alpha\in[0,\pi)$.
\end{thm}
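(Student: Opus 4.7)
My plan is to turn the one-step contraction $\beta(\alpha)\leq\alpha$ of Lemma \ref{l:outer_prod} into actual convergence via a monotone-limit argument, and then to rule out nontrivial fixed points by revisiting the equality case of that lemma.

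Lemma \ref{l:outer_prod} gives $\alpha_{n+1}=\beta(\alpha_n)\leq\alpha_n$, so the sequence stays in $[0,\alpha]\subset[0,\pi)$. Being monotonically non-increasing and bounded below by $0$, it converges to some limit $\alpha_*\in[0,\pi)$. Moreover, the rotor $e^{-\alpha\vec P}$ depends smoothly on $\alpha$, so the correlation $(\operatorname{R}_{P,\alpha}(\v)\star\v)(0)$ depends continuously on $\alpha$, as do the scalar and bivector parts (\ref{scphiq}), (\ref{bivphiq}), and hence so does $\beta$ (at the exceptional value $\varphi=0$, where $\vec Q$ becomes irrelevant, the correlation rotor reduces to $1$ and $\beta(\alpha)=\alpha$). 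Taking the limit in the recurrence therefore yields the fixed-point equation $\alpha_*=\beta(\alpha_*)$.

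The heart of the proof is to show that the only fixed point in $[0,\pi)$ is $\alpha_*=0$. Tracing back the chain of equivalences ending at (\ref{beweisende}) in the proof of Lemma \ref{l:outer_prod}, the equality $\beta(\alpha)=\alpha$ is equivalent to equality in
\[
\sin\!\big(\tfrac{\varphi}{2}\big)\big(-||\v_{\parallel\vec P}||_{L^2}^2-||\v_{\parallel\vec P}||_{L^2}^2||\v_{\perp\vec P}||_{L^2}^2+||\int\v_{\parallel\vec P}\v_{\perp\vec P}\d\x||^2\big)\leq 0.
\]
The Cauchy--Schwarz estimate (\ref{csi}) bounds the second factor by $-||\v_{\parallel\vec P}||_{L^2}^2$, which is strictly negative whenever $\v_{\parallel\vec P}\not\equiv 0$ (the degenerate case $\v_{\parallel\vec P}\equiv 0$ is trivial: then $\operatorname{R}_{P,\alpha}(\v)=\v$ and there is nothing to correct). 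So the fixed-point equation forces $\sin(\varphi_*/2)=0$, i.e.\ $\varphi_*=0$. Substituting this into (\ref{bivphiq}) and using that the three bivectors $\vec P$, $I:=\int\v_{\parallel\vec P}\v_{\perp\vec P}\d\x$, and $\vec P I$ are pairwise orthogonal in $\bigwedge^2\R^3$ (since $\v_{\parallel\vec P}\wedge\v_{\perp\vec P}$ already lies in the $2$-plane of bivectors perpendicular to $\vec P$, and left-multiplication by $\vec P$ rotates that plane by $\pi/2$), projection onto $\vec P$ reduces the identity to $\sin(\alpha_*)||\v_{\parallel\vec P}||_{L^2}^2=0$, whence $\alpha_*=0$.

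The main obstacle is this last step: carefully tracking the equality case through the whole equivalence chain of Lemma \ref{l:outer_prod} and then grade-decomposing (\ref{bivphiq}) precisely enough to propagate $\varphi_*=0$ into $\alpha_*=0$. Monotone convergence and continuity of $\beta$ are then essentially automatic.
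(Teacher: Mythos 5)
Your proof is correct and follows essentially the same route as the paper's: monotonicity from Lemma~\ref{l:outer_prod} plus the lower bound give a limit, continuity of $\beta$ yields the fixed-point equation, and the equality case of the lemma's chain of inequalities forces $\sin(\frac\varphi2)=0$, hence the limit is $0$. The only deviations are minor: the paper deduces $\alpha_*=0$ from $\varphi_*=0$ via the scalar-part identity (\ref{scphiq}) (see (\ref{alpha0phi0})) rather than your bivector-part projection of (\ref{bivphiq}), and it explicitly establishes $\varphi\neq0$ whenever $\alpha\neq0$ and $\v_{\parallel\vec P}\not\equiv0$ \emph{before} tracing the equivalences, so that the division by $\sin(\frac\varphi2)\sin(\alpha)^2$ in the chain is genuinely biconditional --- a caveat your ``tracing back the chain of equivalences'' glosses over, although your own projection argument supplies exactly the needed contrapositive.
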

\begin{proof}
If $\alpha=0$ or $||\v_{\parallel\vec P}||_{L^2}=0$ the series is trivial because $\operatorname{R} _{P,\alpha}(\v)=\v$ almost everywhere. From now on let $\alpha\neq0$ and $||\v_{\parallel\vec P}||_{L^2}\neq0$. Lemma \ref{l:outer_prod} shows that the magnitudes of the series are monotonically decreasing. Since they are bound from below by zero the series is convergent. We denote the limit by $a=\lim_{n\to\infty}\alpha_n$. The function $\beta(\alpha)$ is continuous, so we can swap it and the limit
\begin{equation}
\begin{aligned}\label{beta=alpha}
a=\lim_{n\to\infty}\alpha_{n+1}=\lim_{n\to\infty}\beta(\alpha_n)=\beta(\lim_{n\to\infty} \alpha_n)=\beta(a).
\end{aligned} 
\end{equation}
This equality is the sharp case of the inequality (\ref{beta<alpha}). For $\alpha\neq0$ also $\varphi\neq0$, because from (\ref{scphiq}), $||\v_{\parallel\vec P}||_{L^2}^2+||\v_{\perp\vec P}||_{L^2}^2=1$, and $||\v_{\parallel\vec P}||_{L^2}\neq0$ we get
\begin{equation}
\begin{aligned}\label{alpha0phi0}
\varphi=0
\Leftrightarrow
1=\cos(\varphi)=\cos(\alpha)||\v_{\parallel\vec P}||_{L^2}^2+||\v_{\perp\vec P}||_{L^2}^2
\Leftrightarrow
\cos(\alpha)=1
\Leftrightarrow
\alpha=0
\end{aligned} 
\end{equation} 
Therefore the transformative steps that lead from (\ref{beta<alpha}) to (\ref{beweisende}) in the proof of Lemma \ref{l:outer_prod} are biconditional. So analogously to these steps (\ref{beta=alpha}) is equivalent to
\begin{equation}
\begin{aligned}
&\sin(\frac\varphi2)(-||\v_{\parallel\vec P}||_{L^2}^2-||\v_{\parallel\vec P}||_{L^2}^2||\v_{\perp\vec P}||_{L^2}^2
+||\int\v_{\parallel\vec P}\v_{\perp\vec P}\d \x||^2)
=
0
\end{aligned} 
\end{equation}
The part in the brackets is strictly negative, because of the Cauchy Schwartz inequality (\ref{csi}) and $||\v_{\parallel\vec P}||_{L^2}\neq0$, therefore equality can only occur for $\sin(\frac\varphi2)=0$, which means $\varphi=0$. Like in (\ref{alpha0phi0}) this leaves $a=0$ as the only possible limit.
\end{proof}
\section{Algorithm and Experiments}
Motivated by Theorem \ref{t:conv2} we present Algorithm \ref{alg1} for the iterative detection of outer rotations of vector fields using geometric cross correlation. It has been designed with attention to the efficient use of memory and to handle possible exceptions. In the case of $\alpha=\pi$ the correlation might be real valued and can not be distinguished from the cases where no rotation is necessary. To fix this exception we suggest an artificial disturbance after the first step of the algorithm, compare Line 7 in Algorithm \ref{alg1}. If it is not real the new misalignment will be smaller, because any misalignment is smaller than $\pi$. 
In all other cases Theorem \ref{t:conv2} guarantees the convergence. Our results also apply to the geometric product of the vector fields at any position $\x\in\R^m,\v_{\parallel\vec P}(\x)\neq0$, but we prefer the geometric correlation because of its robustness.
\begin{cor}
Algorithm \ref{alg1} returns the correct rotational misalignment for any three-dimensional linear vector field and its copy generated from an arbitrary outer rotation.
\end{cor}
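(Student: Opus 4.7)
The plan is to derive this corollary as a direct consequence of Theorem~\ref{t:conv2}, bridging the gap between the abstract convergence statement for the angle sequence $\alpha_n$ and the operational behaviour of the algorithm. First I would identify the iterates of Algorithm~\ref{alg1} with the recursion in Theorem~\ref{t:conv2}: at step $n$ the algorithm computes $e^{\varphi_n \vec{Q}_n}$ as the normalised geometric cross correlation of the current rotated copy and $\v$, then updates the working field by the rotor $e^{-\frac{\varphi_n}{2}\vec{Q}_n}(\cdot)e^{\frac{\varphi_n}{2}\vec{Q}_n}$ and accumulates this rotor into the running estimate. By equation~(\ref{beta_def}) the residual misalignment after step $n$ is precisely $\alpha_{n+1} = \beta(\alpha_n)$, so Theorem~\ref{t:conv2} guarantees $\alpha_n \to 0$, i.e.\ the accumulated rotor converges to the rotor that realigns $\operatorname{R}_{\vec P,\alpha}(\v)$ with $\v$.

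Next I would dispose of the degenerate situations. If $\|\v_{\parallel\vec{P}}\|_{L^2}=0$, then $\operatorname{R}_{\vec{P},\alpha}(\v)=\v$ almost everywhere, so no correction is needed and the algorithm returns the identity rotor after the first iteration. If $\alpha=0$ the first correlation is already real and the algorithm terminates with the correct (trivial) rotor. The only remaining obstruction flagged in the paragraph preceding the corollary is $\alpha=\pi$, for which the correlation may be purely real and indistinguishable from the $\alpha=0$ case; here I would invoke the artificial perturbation in Line~7 of Algorithm~\ref{alg1}, arguing that any nonzero disturbance produces a residual angle strictly smaller than $\pi$ (since the perturbed field is no longer the exact antipodal rotation of $\v$), whence Theorem~\ref{t:conv2} applies to this new, reduced angle and drives the iteration to $0$.

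Finally I would record that the composition of the rotors produced at each iteration is itself a rotor, and its inverse is by construction the rotational misalignment of the original pair. Since the residual angle converges to $0$, this composed rotor converges to the unique rotor realising the outer rotation between $\v$ and $\operatorname{R}_{\vec{P},\alpha}(\v)$; termination at the user-specified tolerance yields a result within any prescribed accuracy. The main obstacle I anticipate is giving a clean and rigorous justification that the $\alpha=\pi$ perturbation step indeed produces an iterate with $\alpha<\pi$ to which Theorem~\ref{t:conv2} can be applied, since the theorem is stated on the half-open interval $[0,\pi)$; everything else is bookkeeping around the identification of the algorithm's state with the $\alpha_n$-recursion.
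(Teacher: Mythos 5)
Your proposal coincides with the paper's own justification: the corollary is stated as a direct consequence of Theorem~\ref{t:conv2} combined with the degenerate cases and the $\alpha=\pi$ exception handled by the artificial disturbance in Line~7 of Algorithm~\ref{alg1}, which is exactly the reduction you describe. The one difficulty you flag --- rigorously showing that the perturbation at $\alpha=\pi$ yields a residual angle strictly below $\pi$ so that the theorem (stated on $[0,\pi)$) applies --- is treated even more briefly in the paper, which simply asserts that if the perturbed correlation is not real the new misalignment is smaller, so your attempt is, if anything, more careful on that point.
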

%
\begin{algorithm}
\caption{Detection of outer misalignment of vector fields in 3D}
\label{alg1}
\begin{algorithmic}[1]
\REQUIRE vector field: $\v(\x)$, rotated pattern: $\u(\x)$, desired accuracy: $\varepsilon>0$,
\STATE $\varphi=\pi,\alpha=0, \vec P=\e_{12},iter=0$,
\WHILE{$\varphi>\varepsilon$}
  \STATE $iter++$,
  \STATE $Cor=(\u(\x)\star \v(\x))(0)$,
  \STATE $\varphi=\arg(Cor)$,
  \STATE $\vec Q=\langle Cor\rangle_2|\langle Cor\rangle_2|^{-1}$,
  \IF {$iter=1$ and $\varphi=0$}
    \STATE $\varphi=\pi/4$,
    \STATE $\vec Q=\e_{12}$,
  \ENDIF
  \STATE $\u(\x)=e^{-\frac\varphi2\vec Q}\u(\x)e^{\frac\varphi2\vec Q}$,
  \STATE $\alpha^\prime=2\arg(e^{\frac\alpha2\vec P}e^{\frac\varphi2\vec Q})$,
  \STATE $\vec P=\langle e^{\frac\alpha2\vec P}e^{\frac\varphi2\vec Q}\rangle_2|\langle e^{\frac\alpha2\vec P}e^{\frac\varphi2\vec Q}\rangle_2|^{-1}$,
  \STATE $\alpha=\alpha^\prime$,
 \ENDWHILE
\ENSURE angle: $\alpha$, plane: $\vec P$, corrected pattern: $\u(\x)$, iterations needed: $iter$.
\end{algorithmic}
\end{algorithm}
%
\par
We practically tested Algorithm \ref{alg1} applying it to continuous, linear vector fields $\R^3\to\clifford{3,0}$, that vanish outside the unit square. The vector fields were determined by nine random coefficients with magnitude not bigger than one. The plane and the angle $\alpha\in[0,\pi]$ of the outer rotation were also chosen randomly. The average results of 1000 applications can be found in Table \ref{tab:1}. The error was measured from the sum of the squared differences of the determined and the given coefficients. The experiments showed that 
high numbers of necessary iterations are much more likely to happen for angles with high magnitude and that the average error decreases linearly with the demanded accuracy. But most importantly we observed that Algorithm \ref{alg1} converged in all linear cases, just as the theory suggested. The hypercomplex correlation method of Moxey et al. in \cite{MSE03} can be interpreted as one step of Algorithm \ref{alg1}, i.e., to terminate without iteration. The last row of Table \ref{tab:1} shows how the application of more iterations increases the accuracy. All tests we performed on a computer with two Intel Xeon E5620 processors and 32GB RAM. Even though parallelization of the computation can easily be accomplished, the given results refer to the linear computation.
\par
\begin{table}[ht]
\begin{center}
\begin{tabular}{|l|r|r|r|r|r|} 
\hline
number of iterations	&0	&1	& 10	& 100	& 1000		\rule [-1.2mm]{0mm}{5mm}\\\hline
absolute error		&3.925	&0.915	&0.039	& $10^{-4}$& $10^{-12}$	\rule [-1.2mm]{0mm}{5mm}\\\hline
error per coefficient	&0.436	&0.102	&0.004	& $10^{-5}$&  $10^{-13}$\rule [-1.2mm]{0mm}{5mm}\\\hline
duration in seconds	&0	&0	&$10^{-6}$&0.0004& 0.0031	\rule [-1.2mm]{0mm}{5mm}\\\hline
\end{tabular}
\caption{Results of Algorithm \ref{alg1} applied to continuous linear vector fields depending on the number of iteration steps.\label{tab:1}}
\end{center}
\end{table}


Further, we tested the algorithm in a more practical and applied way. In contrast to the previous experiments, we worked with discrete data that does not obey any linearity properties. In this case we can not calculate the correlation analytically. We have to approximate it.

Representatively, we chose a picture of Leipzig University with 885 times 622 pixels. The components of the vector field are the red, green and blue values of the image, which are each scaled to $[-0.5,0.5]$. The image features a balanced distribution of the three channels, the average is $(0.14,0.15,0.15)$. We applied a rotation in color space about the red axis by an angle of $1.7$ and 
let Algorithm \ref{alg1} detect it. The visual results after 1, 10, 100, and 1000 steps of iteration can be seen in Figure \ref{f:4} and the computational errors in Table \ref{tab:2}. The absolute error is the sum of the norms of the differences

\begin{figure}[ht]
\centering
\subfigure[Original image]{\includegraphics[width=0.48\textwidth]{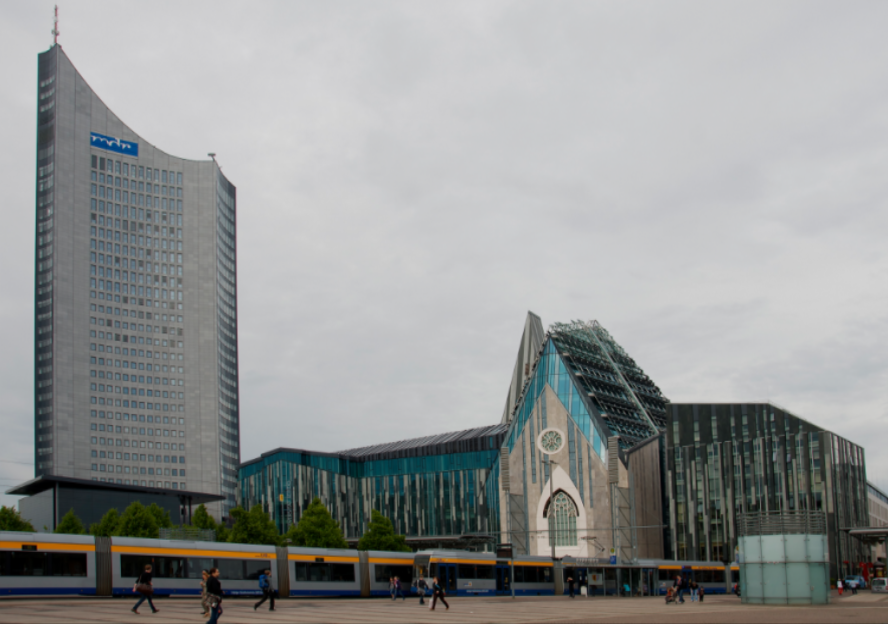}}\quad
\subfigure[Distorted image]{\includegraphics[width=0.48\textwidth]{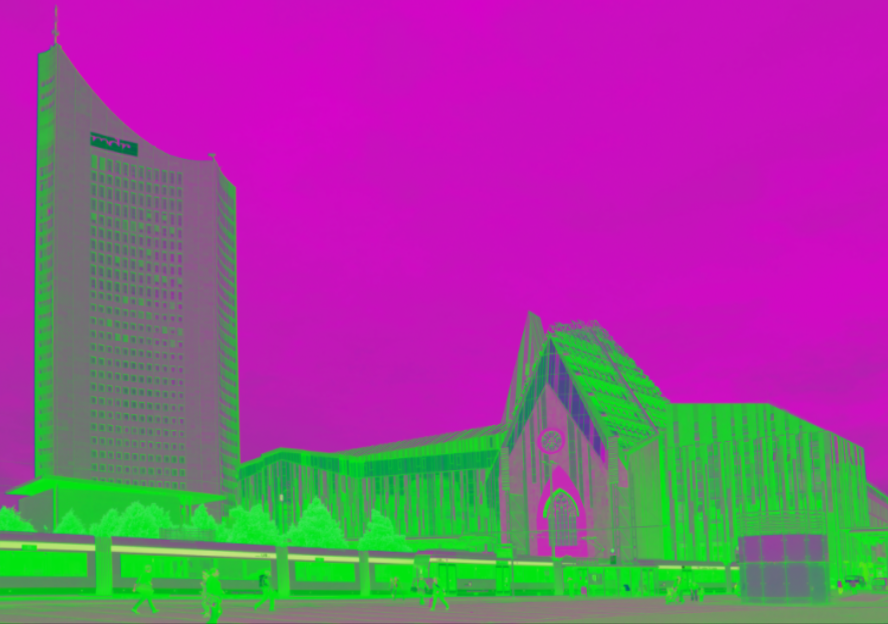}}\quad
\subfigure[Restored image after 1 step]{\includegraphics[width=0.48\textwidth]{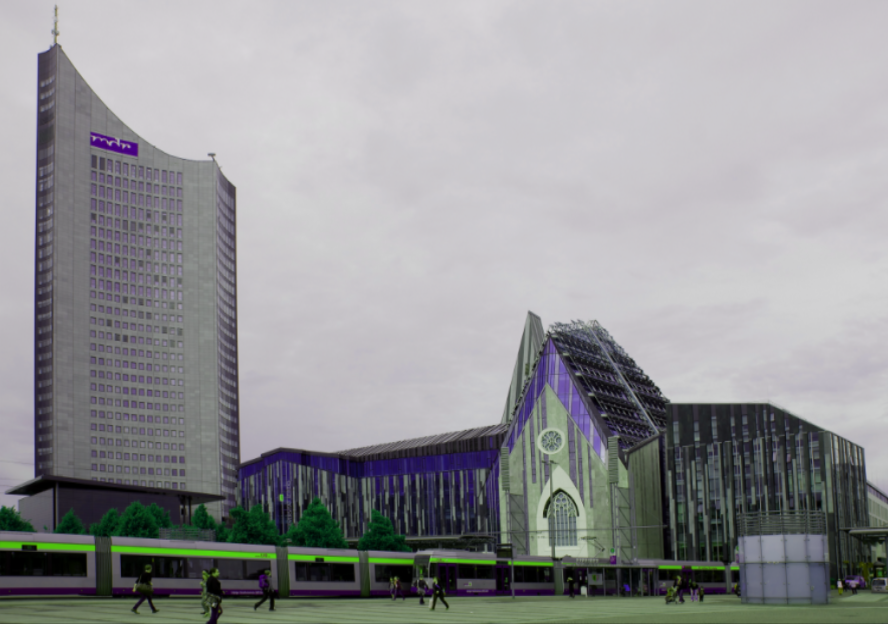}}\quad
\subfigure[Restored image after 10 steps]{\includegraphics[width=0.48\textwidth]{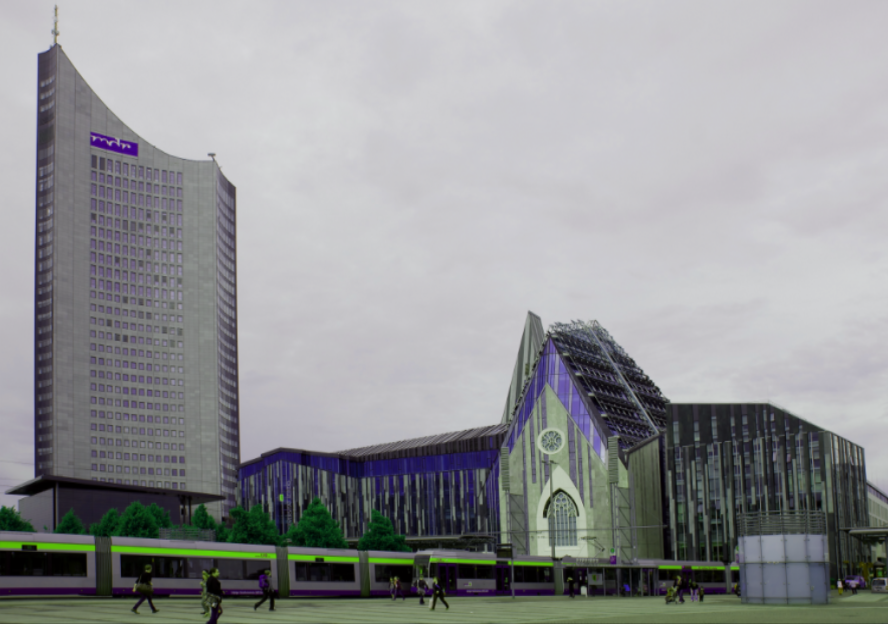}}
\subfigure[Restored image after 100 steps]{\includegraphics[width=0.48\textwidth]{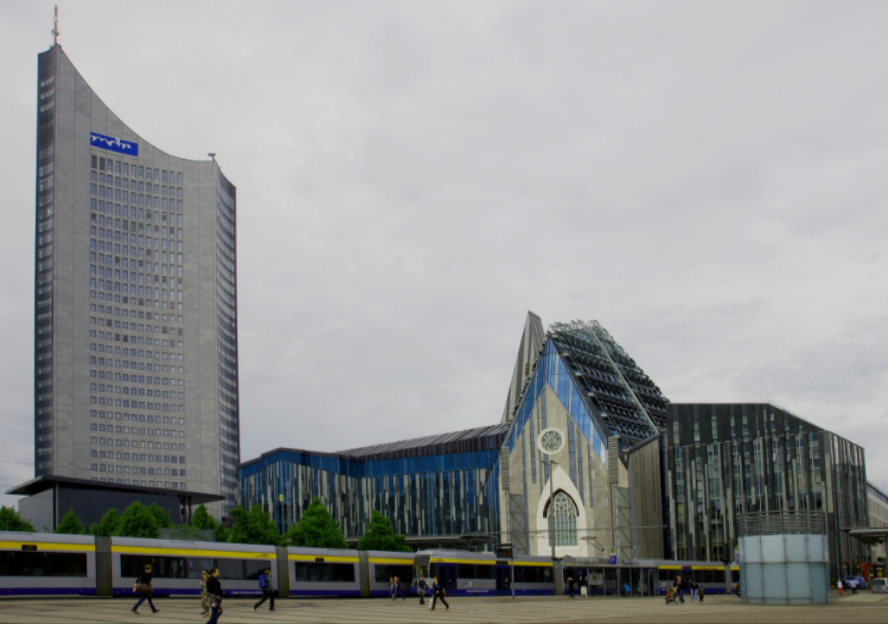}}\quad
\subfigure[Restored image after 1000 steps]{\includegraphics[width=0.48\textwidth]{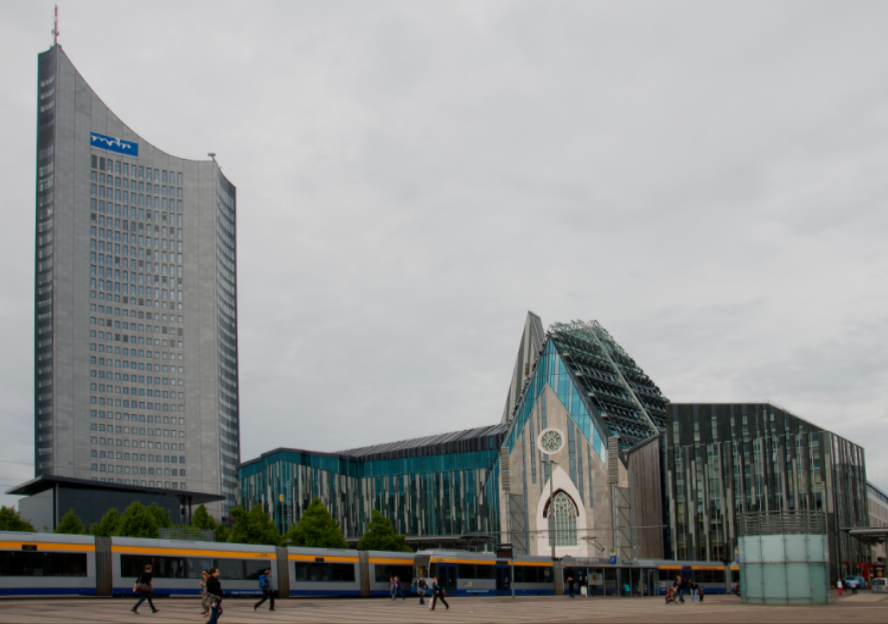}}
\caption{Effect of Algorithm \ref{alg1} applied to the distorted color image of Leipzig University after several steps of iteration.\label{f:4}} 
\end{figure}

\begin{table}[ht]
\begin{center}
\begin{tabular}{|l|r|r|r|r|r|} 
\hline
number of iterations	&0	&1	& 10	& 100	& 1000		\rule [-1.2mm]{0mm}{5mm}\\\hline
absolute error		&258136	&2428	& 2068	& 309	& $10^{-7}$	\rule [-1.2mm]{0mm}{5mm}\\\hline
error per pixel		&0.4676	&0.0044	& 0.0037& 0.0005& $10^{-13}$	\rule [-1.2mm]{0mm}{5mm}\\\hline
duration in seconds	&3.1	&3.5	& 7.3	& 44	& 413		\rule [-1.2mm]{0mm}{5mm}\\\hline
\end{tabular}
\caption{
Results of Algorithm \ref{alg1} applied to the distorted color image of Leipzig University depending on the number of iteration steps.\label{tab:2}}
\end{center}
\end{table}
We also tested Algorithm \ref{alg1} on other images with differing sized and color distributions and different axes and angles of rotational distortion. As it turned out, it hardly influenced the outcome of the algorithm. The chosen picture is a good representative of the overall properties of the algorithm.
\par
Our experiments showed, that the algorithm always converged for any color image and any rotational misalignment. Please note that the discrete case is not covered by Theorem \ref{t:conv2}, so this result was not guaranteed. But they also revealed the high number of necessary iterations to approximate the original image in a way that the difference can not be told by the human eye. This high computational effort makes Algorithm \ref{alg1} very inefficient and of no practical relevance.
\section{Acceleration}
An interesting observation is the big jump into the right direction the algorithm performs during its first step. That explains the approach of Moxey et al. \cite{MSE03} and its success. Further the result of the first step is almost the same for every initial misalignment, no matter in which plane the color space had been rotated in the first place. An explanation for these phenomenons can be given as follows. The algorithm immediately finds a plane in which increasing rotational misalignment leads to small growing differences in the image and continues to move along this plane. This effect can be seen in our example image. Most of the pixels are very close to their original color after the first step already. The remaining distortion can only be observed in the yellow stipe of the tram and the blue windows. As a result the speed of convergence in this plane is rather small because all the pixels that have achieved their original color already will decrease the  angle of rotation with their real valued 
contribution to the correlation. 
\par
This observation gives rise to an idea to accelerate the algorithm. We can assume the plane that is detected during the second step of the algorithm to be very close to the correct plane of the remaining misalignment. Therefore the correlation of only the parallel components will return almost the correct angle as in the 2D case (\ref{outerprod}). Iterative Application of this idea leads to Algorithm \ref{alg2}. 
\par
\begin{algorithm}
\caption{Fast detection of outer misalignment of vector fields in 3D}
\label{alg2}
\begin{algorithmic}[1]
\REQUIRE vector field: $\v(\x)$, rotated pattern: $\u(\x)$, desired accuracy: $\varepsilon>0$,
\STATE $\varphi=\pi,\alpha=0, \vec P=\e_{12},iter=0$,
\WHILE{$\varphi>\varepsilon$}
  \STATE $iter++$,
  \STATE $Cor=(\u(\x)\star \v(\x))(0)$,
  \STATE $\varphi=\arg(Cor)$,
  \STATE $\vec Q=\langle Cor\rangle_2|\langle Cor\rangle_2|^{-1}$,
  \IF {$iter=1$ and $\varphi=0$}
    \STATE $\varphi=\pi/4$,
    \STATE $\vec Q=\e_{12}$,
  \ENDIF
  \STATE $\u(\x)=e^{-\frac\varphi2\vec Q}\u(\x)e^{\frac\varphi2\vec Q}$,
  \STATE $Cor=(\u(\x)\star \v(\x))(0)$,
  \STATE $\vec Q=\langle Cor\rangle_2|\langle Cor\rangle_2|^{-1}$,
  \STATE $Cor=(\u_{\parallel \vec Q}(\x)\star \v_{\parallel \vec Q}(\x))(0)$,
  \STATE $\varphi=\arg(Cor)$,
  \STATE $\u(\x)=e^{-\frac\varphi2\vec Q}\u(\x)e^{\frac\varphi2\vec Q}$,
  \STATE $\alpha^\prime=2\arg(e^{\frac\alpha2\vec P}e^{\frac\varphi2\vec Q})$,
  \STATE $\vec P=\langle e^{\frac\alpha2\vec P}e^{\frac\varphi2\vec Q}\rangle_2|\langle e^{\frac\alpha2\vec P}e^{\frac\varphi2\vec Q}\rangle_2|^{-1}$,
  \STATE $\alpha=\alpha^\prime$,
 \ENDWHILE
\ENSURE angle: $\alpha$, plane: $\vec P$, corrected pattern: $\u(\x)$, iterations needed: $iter$.
\end{algorithmic}
\end{algorithm}
We implemented Algorithm \ref{alg2} and tested it for some example images. The result was astonishing. We could generally observe convergence and the speed was about 100 times the speed of Algorithm \ref{alg1}. The errors and calculation times of Algorithm \ref{alg2} can be found in Table \ref{tab:3}.
\par
\begin{table}[ht]
\begin{center}
\begin{tabular}{|l|r|r|r|r|r|} 
\hline
number of iterations	&0	&1	&2	& 3	& 4		\rule [-1.2mm]{0mm}{5mm}\\\hline
absolute error		&258136	&2337	&17.595	& 0.0966& 0.0005	\rule [-1.2mm]{0mm}{5mm}\\\hline
error per pixel		&0.4676	&0.0042	&$10^{-5}$&$10^{-7}$&$10^{-9}$	\rule [-1.2mm]{0mm}{5mm}\\\hline
duration in seconds	&3.1	&4.7	&6.0	&7.4	&8.9		\rule [-1.2mm]{0mm}{5mm}\\\hline
\end{tabular}
\caption{
Results of Algorithm \ref{alg2} applied to the distorted color image of Leipzig University depending on the number of iteration steps.\label{tab:3}}
\end{center}
\end{table}
A visualization of the approximation using the accelerated algorithm can be found in Figure \ref{f:5}. Already after the second step no difference to the original image can be seen any more. That is why we did not print the visual results of the following steps. 
\par
\begin{figure}[ht]
\centering
\subfigure[Original image]{\includegraphics[width=0.48\textwidth]{uni_orig}}\quad
\subfigure[Distorted image]{\includegraphics[width=0.48\textwidth]{uni0}}\quad
\subfigure[Restored image after 1 step]{\includegraphics[width=0.48\textwidth]{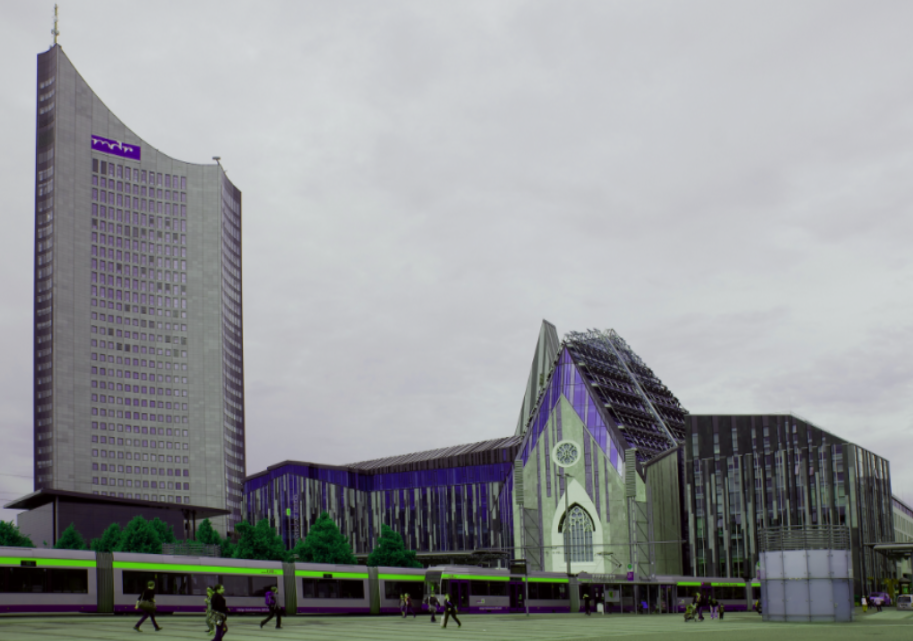}}\quad
\subfigure[Restored image after 2 steps]{\includegraphics[width=0.48\textwidth]{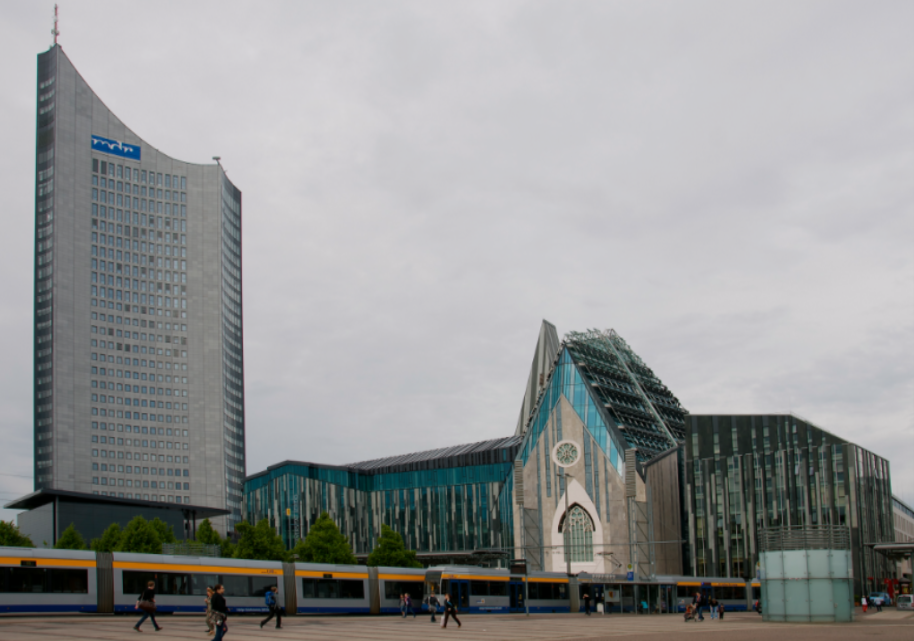}}
\caption{Effect of Algorithm \ref{alg2} applied to the distorted color image of Leipzig University after one and two steps of iteration.\label{f:5}} 
\end{figure}
Please note that the convergence of Algorithm \ref{alg2} has only been observed experimentally but not yet been proved. The qualities of the plane chosen by the algorithm deserve fundamental theoretical analysis in the future.
\section{Conclusions and Outlook}
The geometric cross correlation of two vector fields is scalar and bivector valued. Moxey et al. \cite{MSE03} realized that this rotor yields an approximation of the outer rotational misalignment of vector fields. We analyzed that the quality of this approximation depends on the parallel and the orthogonal parts of the fields and proved in Lemma \ref{l:outer_prod} that the application of this rotor to the outer rotated copy of any vector field never increases the misalignment to the original field. In Theorem \ref{t:conv2} we refined this fact and showed that iterative application completely erases the misalignment of the rotationally misaligned vector fields. 
\par
We presented Algorithm \ref{alg1}, which additionally contains exception handling, and experimentally confirmed our theoretical findings. Our experiments showed general convergence even in the case of discrete fields, but a low rate of convergence. From experimental observation we deduced the idea for Algorithm \ref{alg2} and practically showed its superior performance.
\par
All in all we consider the convergence of the iterative geometric correlation as in Algorithm \ref{alg1} to be of no practical relevance, but an interesting feature of the geometric product. Its acceleration, as shown in Algorithm \ref{alg2}, is worth further studies.
\par
We currently analyze the application of this approach to total rotations in \cite{BSH12b}. Further we examine the properties of the plane the iterative correlation suggests from the second step on. This may be the key to proving the convergence of Algorithm \ref{alg2}. 
In our future work we want to further accelerate the algorithm by means of a fast Fourier transform and a geometric convolution theorem \cite{BSH12c}. 
Another promising idea to pursue is the development of a customized convolution Clifford \cite{BBSS13} that is able to detect the misalignment without iteration.
Practically it might be of interest how the algorithm is able to deal with vector fields, that are disturbed or differ by more complex transformations and if it is able to minimize the squared differences if the fields are not equal after rotation but only similar.

\section*{Acknowledgements}
The authors would like to thank the FAnToM development group from Leipzig University for providing the environment and the data for the visualization of the presented work, especially Stefan Koch and Mario Hlawitschka. This work was partially supported by the European Social Fund (Application No. 100098251).

  \bibliographystyle{plain} 
  \bibliography{Literaturverzeichnis}

\begin{thebibliography}{10}

\bibitem{BRO92}
Lisa~Gottesfeld Brown.
\newblock A survey of image registration techniques.
\newblock {\em ACM Computing Surveys}, 24:325--376, 1992.

\bibitem{BBSS13}
Roxana Bujack, H.~De~Bie, N.~De~Schepper, and G.~Scheuermann.
\newblock Convolution products for hypercomplex fourier transforms.
\newblock {\em Accepted for publication in J. Math. Imaging Vision}, 2013.

\bibitem{BSH12c}
Roxana Bujack, Gerik Scheuermann, and Eckhard Hitzer.
\newblock A general geometric fourier transform convolution theorem.
\newblock {\em Advances in Applied Clifford Algebras}, pages 1--24.

\bibitem{BSH12b}
Roxana Bujack, Gerik Scheuermann, and Eckhard Hitzer.
\newblock {Detection of Total Rotations on linear 2D-Vector Fields with
  Iterative Geometric Correlation}.
\newblock {\em AIP Conference Proceedings}, 1493:190--199, 2012.

\bibitem{C1878}
William~Kingdon Clifford.
\newblock {Applications of Grassmann's Extensive Algebra}.
\newblock {\em American Journal of Mathematics}, 1(4):350--358, 1878.

\bibitem{Ebl03}
J.~Ebling and G.~Scheuermann.
\newblock Clifford convolution and pattern matching on vector fields.
\newblock In {\em Visualization, VIS 2003. IEEE}, pages 193--200, 2003.

\bibitem{Ebl06}
Julia Ebling.
\newblock {\em {Visualization and Analysis of Flow Fields using Clifford
  Convolution}}.
\newblock PhD thesis, University of Leipzig, Germany, 2006.

\bibitem{GSH10}
Andreas Gï¿?rlitz, Helmut Seibert, and Dietmar Hildenbrand.
\newblock {Registration of Multichannel Images using Geometric Algebra}.
\newblock In V.~Skala and E.~Hitzer, editors, {\em Proceedings Workshop on
  Computer Graphics, Computer Vision and Mathematics}, Brno University of
  Technology, Czech Republic, 2010.

\bibitem{Hes86}
David Hestenes.
\newblock {\em {New Foundations for Classical Mechanics}}.
\newblock Kluwer Academic Publishers, Dordrecht, Netherlands, 1986.

\bibitem{HS84}
David Hestenes and Garret Sobczyk.
\newblock {\em {Clifford Algebra to Geometric Calculus}}.
\newblock D. Reidel Publishing Group, Dordrecht, Netherlands, 1984.

\bibitem{KH75}
C.D. Kuglin and D.C. Hines.
\newblock {The Phase Correlation Image Alignment Method}.
\newblock In {\em Proceedings of the IEEE, International Conference on
  Cybernetics and Society}, pages 163--165, New York, USA, 1975.

\bibitem{MSM11}
Jose Mennesson, Christophe Saint-Jean, and Laurent Mascarilla.
\newblock Color object recognition based on a clifford fourier transform.
\newblock In Leo Dorst and Joan Lasenby, editors, {\em Guide to Geometric
  Algebra in Practice}, pages 175--191. Springer London, 2011.

\bibitem{MSE02}
C.~Eddie Moxey, Todd~A. Ell, and Steven~J. Sangwine.
\newblock {Vector correlation of color images}.
\newblock {\em 1st European Conference on Color in Graphics, Imaging and
  Vision, Society for Imaging Science and Technology}, 2002.

\bibitem{MSE03}
C.~Eddie Moxey, Steven~J. Sangwine, and Todd~A. Ell.
\newblock {Hypercomplex Correlation Techniques for Vector Images}.
\newblock {\em Signal Processing, IEEE Transactions}, 51(7):1941--1953, 2003.

\bibitem{Per09}
Christian Perwass.
\newblock {\em Geometric Algebra with Applications in Engineering}.
\newblock Springer, Berlin Heidelberg, 2009.

\bibitem{RK82}
Azriel Rosenfeld and Avinash~C. Kak.
\newblock {\em Digital Picture Processing}.
\newblock Academic Press, Inc., Orlando, FL, USA, 2nd edition, 1982.

\bibitem{MSE01}
Steven~J. Sangwine, Todd~A. Ell, and C.~Eddie Moxey.
\newblock {Vector phase correlation}.
\newblock {\em 1st European Conference on Color in Graphics, Imaging and
  Vision, Society for Imaging Science and Technology}, 37(25):1513 --1515, dec
  2001.

\bibitem{Sch99}
Gerik Scheuermann.
\newblock {\em {Topological Vector Field Visualization with Clifford Algebra}}.
\newblock PhD thesis, University of Kaiserslautern, Germany, 1999.

\bibitem{Schl11}
Michael Schlemmer.
\newblock {\em Pattern Recognition for Feature Based and Comparative
  Visualization}.
\newblock PhD thesis, Universit\"at Kaiserslautern, Germany, 2011.

\bibitem{ZIT03}
Barbara Zitov\'{a} and Jan Flusser.
\newblock Image registration methods: a survey.
\newblock {\em Image and Vision Computing}, 21(11):977--1000, 2003.

\end{thebibliography}

\end{document}